\theoremstyle{plain}
\newtheorem{lemma}{Lemma}
\newcommand{\vct}[1]{\boldsymbol{#1}} 
\newcommand{\mat}[1]{\boldsymbol{#1}} 
\newcommand{\T}{^{\textrm T}} 
\DeclareMathOperator{\argmax}{arg\,max}
\DeclareMathOperator{\argmin}{arg\,min}
\newcommand{\eat}[1]{}
\newcommand{\innerp}[1]{\langle #1 \rangle}
\DeclareMathOperator{\diam}{diam}
\DeclareMathOperator{\diag}{diag}
\DeclareMathOperator*{\conv}{conv}
\newcommand{\mldiag}{\textsc{diag}\xspace}
\newcommand{\hdsl}{\textsc{hdsl}\xspace}
\newcommand{\ident}{\textsc{identity}\xspace}
\newcommand{\pcaml}{\textsc{pca+oasis}\xspace}
\newcommand{\rpml}{\textsc{rp+oasis}\xspace}
\newcommand{\svm}{\textsc{svm}\xspace}
\begin{document}

\title{Similarity Learning for High-Dimensional Sparse Data}

\author{
Kuan Liu\thanks{Equal contribution.}~~\thanks{Department of Computer Science, University of Southern California,
\texttt{\{kuanl,feisha\}@usc.edu}.}~, 
Aur\'elien Bellet\footnotemark[1]~~\thanks{LTCI UMR 5141, T\'el\'ecom ParisTech \& CNRS, \texttt{aurelien.bellet@telecom-paristech.fr}. Most of the work in this paper was carried out while the author was affiliated with Department of Computer Science, University of Southern California.}~, 
Fei Sha\footnotemark[2]
}

\date{}

\maketitle

\begin{abstract}
A good measure of similarity between data points is crucial to many tasks in machine learning. Similarity and metric learning methods learn such measures automatically from data, but they do not scale well respect to  the dimensionality of the data. In this paper, we propose a method that can learn efficiently similarity measure from high-dimensional sparse data. The core idea is to parameterize the similarity measure as a convex combination of rank-one matrices with specific sparsity structures. The parameters are then optimized with an approximate Frank-Wolfe procedure to maximally satisfy relative similarity constraints on the training data. Our algorithm greedily incorporates one pair of features at a time into the similarity measure, providing an efficient way to control the number of active features and thus reduce overfitting. It enjoys very appealing convergence guarantees and its time and memory complexity depends on the sparsity of the data instead of the dimension of the feature space. Our experiments on real-world high-dimensional datasets demonstrate its potential for classification, dimensionality reduction and data exploration.

\end{abstract}


\section{Introduction}

In many applications, such as text processing, computer vision or biology, data is represented as very high-dimensional but sparse vectors. The ability to compute meaningful similarity scores between these objects is crucial to many tasks, such as classification, clustering or ranking. However, handcrafting a relevant similarity measure for such data is challenging because it is usually the case that only a small, often unknown subset of features is actually relevant to the task at hand. For instance, in drug discovery, chemical compounds can be represented as sparse features describing their 3D properties, and only a few of them play an role in determining whether the compound will bind to a target receptor \citep{Guyon2004}. In text classification, where each document is represented as a sparse bag of words, only a small subset of the words is generally sufficient to discriminate among documents of different topics.

A principled way to obtain a similarity measure tailored to the problem of interest is to learn it from data. This line of research, known as similarity and distance metric learning, has been successfully applied to many application domains \citep[see][for recent surveys]{Kulis2012,Bellet2013}. The basic idea is to learn the parameters of a similarity (or distance) function such that it satisfies proximity-based constraints, requiring for instance that some data instance $\vct{x}$ be more similar to $\vct{y}$ than to $\vct{z}$ according to the learned function.
However, similarity learning typically requires estimating a matrix with $O(d^2)$ entries (where $d$ is the data dimension) to account for correlation between pairs of features. For high-dimensional data (say, $d > 10^4$), this is problematic for at least three reasons: (i) training the metric is computationally expensive (quadratic or cubic in $d$), (ii) the matrix may not even fit in memory, and (iii) learning so many parameters is likely to lead to severe overfitting, especially for sparse data where some features are rarely observed.

To overcome this difficulty, a common practice is to first project data into a low-dimensional space (using PCA or random projections), and then learn a similarity function in the reduced space. Note that the projection intertwines useful features and irrelevant/noisy ones. Moreover, it is also difficult to interpret the reduced feature space, when we are interested in discovering what features are more important than others for discrimination.

In this paper, we propose a novel method to learn a bilinear similarity function $S_{\mat{M}}(\vct{x},\vct{x}') = \vct{x}\T\mat{M}\vct{x}'$ directly in the original high-dimensional space while avoiding the above-mentioned pitfalls. The main idea combines three ingredients: the sparsity of the data, the parameterization of $\mat{M}$ as a convex combination of rank-one matrices with special sparsity structures, and an approximate Frank-Wolfe procedure \citep{Frank1956,Clarkson2010,Jaggi2013} to learn the similarity parameters. 
The resulting algorithm iteratively and greedily incorporates one pair of features at a time into the learned similarity, providing an efficient way to ignore irrelevant features as well as to guard against overfitting through early stopping. Our method has appealing approximation error guarantees, time and memory complexity independent of $d$ and outputs extremely sparse similarity functions that are fast to compute and  to interpret.

The usefulness of the proposed approach is evaluated on several datasets with up to 100,000 features, some of which have a large proportion of irrelevant features. To the best of our knowledge, this is the first time that a full similarity or distance metric is learned directly on such high-dimensional datasets without first reducing dimensionality. 
Our approach significantly outperforms both a diagonal similarity learned in the original space and a full similarity learned in a reduced space (after PCA or random projections). Furthermore, our similarity functions are extremely sparse (in the order of {0.0001\%} of nonzero entries), using a sparse subset of features and thus providing more economical analysis of the resulting model (for example, examining the importance of the original features and their pairwise interactions).

The rest of this paper is organized as follows. Section \ref{sRelate} briefly reviews some related work. Our approach is described in Section \ref{sApproach}. We present our experimental results in Section \ref{sExp} and conclude in Section \ref{sConclude}.


\section{Related Work}
\label{sRelate}

Learning similarity and distance metric has attracted a lot of interests. In this section, we review previous efforts that focus on efficient algorithms for  high-dimensional data -- a comprehensive survey of existing approaches can be found in \citep{Bellet2013}.

A majority of learning similarity has focused on learning either a Mahalanobis distance $d_{\mat{M}}(\vct{x},\vct{x}') = (\vct{x}-\vct{x}')\T\mat{M}(\vct{x}-\vct{x}')$ where $\mat{M}$ is a symmetric positive semi-definite (PSD) matrix, or a bilinear similarity $S_{\mat{M}}(\vct{x},\vct{x}') = \vct{x}\T\mat{M}\vct{x}'$ where $\mat{M}$ is an arbitrary $d\times d$ matrix. In both cases, it requires estimating $O(d^2)$ parameters, which is undesirable in the high-dimensional setting. Virtually all existing methods thus resort to dimensionality reduction (such as PCA or random projections) to preprocess the data when it has more than a few hundred dimensions, thereby incurring a potential loss of performance and interpretability of the resulting function \citep[see e.g.,][]{Davis2007,Weinberger2009,Guillaumin2009,Ying2012,Wang2012b,Lim2013,Qian2014}.

There have been a few solutions to this essential limitation. The most drastic strategy is to learn a diagonal matrix $\mat{M}$  \citep{Schultz2003,Gao2014}, which is very restrictive as it amounts to a simple weighting of the features.
Instead, some approaches assume an explicit low-rank decomposition $\mat{M}=\mat{L}\T\mat{L}$ and learn $\mat{L}\in\mathbb{R}^{r\times d}$ in order to reduce the number of parameters to learn \citep{Goldberger2004,Weinberger2009,Kedem2012}. But this results in nonconvex formulations with many bad local optima \citep{Kulis2012} and requires to tune $r$ carefully. Moreover, the training complexity still depends on $d$ and can thus remain quite large. Another direction is to learn $\mat{M}$ as a combination of rank-one matrices. In \citep{Shen2012}, the combining elements are selected greedily in a boosting manner but each iteration has an $O(d^2)$ complexity. To go around this limitation, \citet{Shi2014} generate a set of rank-one matrices before training and learn a sparse combination. However, as the dimension increases, a larger dictionary is needed and can be expensive to generate. Some work have also gone into sparse and/or low-rank regularization to reduce overfitting in high dimensions \citep{Rosales2006,Qi2009,Ying2009} but those do not reduce the training complexity of the algorithm.

To the best of our knowledge, DML-eig \citep{Ying2012} and its extension DML-$\rho$ \citep{Cao2012} are the only prior attempts to use a Frank-Wolfe procedure for metric or similarity learning. However, their formulation requires finding the largest eigenvector of the gradient matrix at each iteration, which scales in $O(d^2)$ and is thus unsuitable for the high-dimensional setting we consider in this work.


\section{Proposed Approach}
\label{sApproach}

This section introduces \hdsl (High-Dimensional Similarity Learning), the approach proposed in this paper. We first describe our problem formulation (Section~\ref{sec:form}), then derive an efficient algorithm to solve it (Section~\ref{sec:algo}).

\subsection{Problem Formulation}
\label{sec:form}

In this work, we propose to learn a similarity function for high-dimensional sparse data. We assume the data points lie in some space $\mathcal{X}\subseteq\mathbb{R}^d$, where $d$ is large ($d >10^4$), and are $D$-sparse on average ($D \ll d$). Namely, the number of nonzero entries is typically much smaller than $d$.
We focus on learning a similarity function $S_{\mat{M}}:\mathcal{X}\times\mathcal{X} \rightarrow \mathbb{R}$ of the form $S_{\mat{M}}(\vct{x},\vct{x}') = \vct{x}\T\mat{M}\vct{x}'$, where $\mat{M}\in\mathbb{R}^{d\times d}$. Note that for any $\mat{M}$, $S_{\mat{M}}$ can be computed in $O(D^2)$ time on average.

\paragraph{Feasible domain} Our goal is to derive an algorithm to learn a very sparse $\mat{M}$ with time and memory requirements that depend on $D$ but not on $d$. To this end, given a scale parameter $\lambda>0$, we will parameterize $\mat{M}$ as a convex combination of 4-sparse $d\times d$ bases:
\begin{equation*}
\mat{M}\in\mathcal{D}_\lambda = \conv(\mathcal{B}_\lambda),\quad\quad\text{ with } \mathcal{B}_\lambda = \bigcup_{ij} \left\{\mat{P}_\lambda^{(ij)}, \mat{N}_\lambda^{(ij)}\right\},
\end{equation*}
where for any pair of features $i,j\in\{1,\dots,d\}$, $i\neq j$,
{\arraycolsep=1.4pt\def\arraystretch{0.5}
\begin{equation*}
\mat{P}_\lambda^{(ij)} = \lambda(\boldsymbol{e}_i+\boldsymbol{e}_j)(\boldsymbol{e}_i+\boldsymbol{e}_j)^T = \left(\begin{array}{ccccc} \cdot & \cdot & \cdot & \cdot & \cdot\\ \cdot & \lambda & \cdot & \lambda & \cdot\\\cdot & \cdot & \cdot & \cdot & \cdot\\\cdot & \lambda & \cdot & \lambda & \cdot\\ \cdot & \cdot & \cdot & \cdot & \cdot\end{array}\right),
\mat{N}_\lambda^{(ij)} = \lambda(\boldsymbol{e}_i-\boldsymbol{e}_j)(\boldsymbol{e}_i-\boldsymbol{e}_j)^T = \left(\begin{array}{ccccc} \cdot & \cdot & \cdot & \cdot & \cdot\\ \cdot & \lambda & \cdot & -\lambda & \cdot\\\cdot & \cdot & \cdot & \cdot & \cdot\\\cdot & -\lambda & \cdot & \lambda & \cdot\\ \cdot & \cdot & \cdot & \cdot & \cdot\end{array}\right).
\end{equation*}
}The use of such sparse matrices was suggested by \citet{Jaggi2011a}. Besides the fact that they are instrumental to the efficiency of our algorithm (see Section~\ref{sec:algo}), we give some additional motivation for their use in the context of similarity learning.

First, any $\mat{M}\in\mathcal{D}_\lambda$ is a convex combination of symmetric PSD matrices and is thus also symmetric PSD. Unlike many metric learning algorithms, we thus avoid the $O(d^3)$ cost of projecting onto the PSD cone. Furthermore, constraining $\mat{M}$ to be symmetric PSD provides useful regularization to prevent overfitting \citep{Chechik2009} and allows the use of the square root of $\mat{M}$ to project the data into a new space where the dot product is equivalent to $S_{\mat{M}}$. Because the bases in $\mathcal{B}_\lambda$ are rank-one, the dimensionality of this projection space is at most the number of bases composing $\mat{M}$.

Second, each basis operates on two features only. In particular, $S_{\mat{P}_\lambda^{(ij)}}(\vct{x},\vct{x}') = \lambda(x_ix_i' + x_jx_j' + x_ix_j' + x_jx_i')$ assigns a higher score when feature $i$ appears jointly in $\vct{x}$ and $\vct{x}'$ (likewise for $j$), as well as when feature $i$ in $\vct{x}$ and feature $j$ in $\vct{y}$ (and vice versa) co-occur. Conversely, $S_{\mat{N}_\lambda^{(ij)}}$ penalizes the co-occurrence of features $i$ and $j$. This will allow us to easily control the number of active features and learn a very compact similarity representation.

Finally, notice that in the context of text data represented as bags-of-words (or other count data), the bases in $\mathcal{B}_\lambda$ are quite natural: they can be intuitively thought of as encoding the fact that a term $i$ or $j$ present in both documents makes them more similar, and that two terms $i$ and $j$ are associated with the same/different class or topic.

\paragraph{Optimization problem} We now describe the optimization problem to learn the similarity parameters. Following previous work \citep[see for instance][]{Schultz2003,Weinberger2009,Chechik2009}, our training data consist of side information in the form of triplet constraints:
$$\mathcal{T} = \left\{\vct{x}_t \text{ should be more similar to }\vct{y}_t \text{ than to } \vct{z}_t\right\}_{t=1}^T.$$
Such constraints can be built from a labeled training sample, provided directly by a domain expert, or obtained through implicit feedback such as clicks on search engine results. For notational convenience, we write $\mat{A}^t = \vct{x}_t(\vct{y}_t-\vct{z}_t)\T\in\mathbb{R}^{d\times d}$ for each constraint $t=1,\dots,T$. We want to define an objective function that applies a penalty when a constraint $t$ is not satisfied with margin at least 1, i.e. whenever $\innerp{\mat{A}^t,\mat{M}} = S_{\mat{M}}(\vct{x}_t,\vct{y}_t) - S_{\mat{M}}(\vct{x}_t,\vct{z}_t) < 1$. To this end, we use the smoothed hinge loss $\ell:\mathbb{R}\rightarrow\mathbb{R}^+$:
$$\ell\left(\innerp{\mat{A}^t,\mat{M}}\right) = \left\{ \begin{array}{ll} 0 & \text{if } \innerp{\mat{A}^t,\mat{M}} \geq 1\\\frac{1}{2} - \innerp{\mat{A}^t,\mat{M}} & \text{if } \innerp{\mat{A}^t,\mat{M}} \leq 0\\\frac{1}{2} \left(1-\innerp{\mat{A}^t,\mat{M}}\right)^2& \text{otherwise}\end{array}\right.,$$
where $\innerp{\cdot,\cdot}$ denotes the Frobenius inner product.\footnote{In principle, any other convex and continuously differentiable loss function can be used in our framework, such as the squared hinge loss, logistic loss or exponential loss.}

Given $\lambda>0$, our similarity learning formulation aims at finding the matrix $\mat{M}\in\mathcal{D}_\lambda$ that minimizes the average penalty over the triplet constraints in $\mathcal{T}$:

\begin{equation}
\label{eq:form}
\min_{\mat{M}\in\mathbb{R}^{d\times d}} \quad f(\mat{M}) = \frac{1}{T}\sum_{t=1}^T \ell\left(\innerp{\mat{A}^t,\mat{M}}\right)\quad\text{s.t.}\quad \mat{M}\in\mathcal{D}_\lambda.
\end{equation}

Due to the convexity of the smoothed hinge loss, Problem \eqref{eq:form} involves minimizing a convex function over the convex domain $\mathcal{D}_\lambda$. In the next section, we propose a greedy algorithm to solve this problem.

\subsection{Algorithm}
\label{sec:algo}

\subsubsection{Exact Frank-Wolfe Algorithm}

\begin{algorithm}[t]
\caption{Frank Wolfe algorithm for problem \eqref{eq:form}}
\label{alg:fw}
\begin{algorithmic}[1]
\STATE initialize $\mat{M}^{(0)}$ to an arbitrary $\mat{B}\in\mathcal{B}_\lambda$
\FOR{$k = 0,1,2,\dots$}
\STATE let $\mat{B}_{F}^{(k)} \in \argmin_{\mat{B}\in\mathcal{B}_\lambda} \innerp{\mat{B},\nabla f(\mat{M}^{(k)})}$ and $\mat{D}_{F}^{(k)} = \mat{B}_{F}^{(k)} - \mat{M}^{(k)}$ \hfill{\it\small // compute forward direction}
\STATE let $\mat{B}_{A}^{(k)} \in \argmax_{\mat{B}\in\mathcal{S}^{(k)}} \innerp{\mat{B},\nabla f(\mat{M}^{(k)})}$ and $\mat{D}_{A}^{(k)} =  \mat{M}^{(k)} - \mat{B}_{A}^{(k)}$ \hfill{\it\small // compute away direction}
\IF{$\innerp{\mat{D}_{F}^{(k)},\nabla f(\mat{M}^{(k)})} \leq \innerp{\mat{D}_{A}^{(k)},\nabla f(\mat{M}^{(k)})}$}
\STATE $\mat{D}^{(k)} = \mat{D}_{F}^{(k)}$ and $\gamma_{\max} = 1$\hfill{\it\small // choose forward step~~}
\ELSE
\STATE $\mat{D}^{(k)} = \mat{D}_{A}^{(k)}$ and $\gamma_{\max} = \alpha^{(k)}_{\mat{B}_{A}^{(k)}} / (1-\alpha^{(k)}_{\mat{B}_{A}^{(k)}})$\hfill{\it\small // choose away step~~}
\ENDIF
\STATE let $\gamma^{(k)} \in \argmin_{\gamma\in[0,\gamma_{\max}]} f(\mat{M}^{(k)}+\gamma \vct{D}^{(k)})$\hfill{\it\small // perform line search}
\STATE $\mat{M}^{(k+1)} = \mat{M}^{(k)} + \gamma^{(k)} \mat{D}^{(k)}$ \hfill{\it\small // update iterate towards direction~~}
\ENDFOR
\end{algorithmic}
\end{algorithm}

We propose to use a Frank-Wolfe (FW) algorithm \citep{Frank1956,Clarkson2010,Jaggi2013} to learn the similarity. FW is a general procedure to minimize a convex and continuously differentiable function over a compact and convex set. At each iteration, it moves towards a feasible point that minimizes a linearization of the objective function at the current iterate. Note that a minimizer of this linear function must be at a vertex of the feasible domain. We will exploit the fact that in our formulation \eqref{eq:form}, the vertices of the feasible domain $\mathcal{D}_\lambda$ are the elements of $\mathcal{B}_\lambda$ and have special structure.


The FW algorithm applied to \eqref{eq:form} and enhanced with so-called away steps \citep{Guelat1986} is described in details in Algorithm~\ref{alg:fw}. 
During the course of the algorithm, we explicitly maintain a representation of each iterate $\mat{M}^{(k)}$ as a convex combination of basis elements:
$$\mat{M}^{(k)} = \sum_{\mat{B}\in\mathcal{B}_\lambda}\alpha^{(k)}_{\mat{B}}\mat{B},\quad\quad \text{ where } \sum_{\mat{B}\in\mathcal{B}_\lambda}\alpha^{(k)}_{\mat{B}} = 1 \text{ and } \alpha^{(k)}_{\mat{B}}\geq 0.$$
We denote the set of active basis elements in $\mat{M}^{(k)}$ as $\mathcal{S}^{(k)} = \{\mat{B}\in\mathcal{B}_\lambda : \alpha^{(k)}_{\mat{B}} > 0\}$.
The algorithm goes as follows. We initialize $\mat{M}^{(0)}$ to a random basis element. Then, at each iteration, we greedily choose between moving towards a (possibly) new basis (forward step) or reducing the weight of an active one (away step). The extent of the step is determined by line search. As a result, Algorithm~\ref{alg:fw} adds only one basis (at most 2 new features) at each iteration, which provides a convenient way to control the number of active features and maintain a compact representation of $\mat{M}^{(k)}$ in $O(k)$ memory cost. Furthermore, away steps provide a way to reduce the importance of a potentially ``bad'' basis element added at an earlier iteration (or even remove it completely when $\gamma^{(k)} = \gamma_{\max}$). Note that throughout the execution of the algorithm, all iterates $\mat{M}^{(k)}$ remain convex combinations of basis elements and are thus feasible. The following lemma shows that the iterates of Algorithm~\ref{alg:fw} converge to an optimal solution of \eqref{eq:form} with a rate of $O(1/k)$.

%

\begin{lemma}
\label{lem:converge}
Let $\lambda>0$, $\mat{M}^*$ be an optimal solution to \eqref{eq:form} and $L = \frac{1}{T}\sum_{t=1}^T\|\mat{A}^t\|_F^2$. At any iteration $k\geq 1$ of Algorithm~\ref{alg:fw}, the iterate $\mat{M}^{(k)}\in\mathcal{D}_\lambda$ satisfies $f(\mat{M}^{(k)})-f(\mat{M}^*) \leq 16L\lambda^2/(k+2)$. Furthermore, it has at most rank $k+1$ with $4(k+1)$ nonzero entries, and uses at most $2(k+1)$ distinct features.
\end{lemma}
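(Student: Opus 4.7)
The lemma splits into a convergence bound and three structural bounds. For convergence, the plan is to invoke the standard Frank--Wolfe analysis, which yields $f(\mat{M}^{(k)})-f(\mat{M}^*)\leq 2C_f/(k+2)$ for a convex continuously-differentiable objective on a compact convex set, with $C_f$ the curvature constant on $\mathcal{D}_\lambda$. I would then verify that inserting away steps does not degrade this rate; this is the standard Guelat--Marcotte argument, using that the direction chosen in lines~5--8 has at least as much linear decrease as the forward direction, so with line search each iteration is at least as effective as a classical forward step.

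The heart of the proof is to bound $C_f \leq 8L\lambda^2$, through the familiar sandwich $C_f \leq L_f\cdot\diam(\mathcal{D}_\lambda)^2$, where $L_f$ is the Frobenius-Lipschitz constant of $\nabla f$. Since the smoothed hinge loss satisfies $|\ell''|\leq 1$, a direct calculation (triangle inequality together with Cauchy--Schwarz on each triplet term) gives $\|\nabla f(\mat{M}_1)-\nabla f(\mat{M}_2)\|_F\leq \tfrac{1}{T}\sum_t\|\mat{A}^t\|_F^2\cdot\|\mat{M}_1-\mat{M}_2\|_F = L\|\mat{M}_1-\mat{M}_2\|_F$, hence $L_f\leq L$. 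For the diameter, I would rely on the crucial observation that every basis $\mat{B}\in\mathcal{B}_\lambda$ is rank-one PSD with $\|\mat{B}\|_F^2=4\lambda^2$, so $\innerp{\mat{B}_1,\mat{B}_2}\geq 0$ and $\|\mat{B}_1-\mat{B}_2\|_F^2\leq 8\lambda^2$ for any two bases; since the diameter of a convex hull is attained at a pair of vertices, $\diam(\mathcal{D}_\lambda)^2 \leq 8\lambda^2$. Combining yields $C_f \leq 8L\lambda^2$ and the advertised $16L\lambda^2/(k+2)$.

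The three structural claims are bookkeeping on the active set $\mathcal{S}^{(k)}$: from $|\mathcal{S}^{(0)}|=1$, a forward step adds at most one element while an away step never adds any, so $|\mathcal{S}^{(k)}|\leq k+1$. Since $\mat{M}^{(k)}$ is a convex combination of the active bases, and each basis is rank-one with exactly four nonzero entries located in the rows and columns of its two features, the rank, nonzero-entry count and feature count of $\mat{M}^{(k)}$ are at most $k+1$, $4(k+1)$ and $2(k+1)$ respectively. The only real subtlety, as I see it, is producing the constant $16$ rather than $32$: this is precisely where the rank-one PSD property of the bases matters, tightening the naive diameter bound from $16\lambda^2$ to $8\lambda^2$.
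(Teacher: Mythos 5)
Your proof is correct and follows essentially the same route as the paper, which simply cites the standard Frank--Wolfe curvature bound $2C_f/(k+2)$ with $C_f\le L_f\cdot\diam_{\|\cdot\|_F}(\mathcal{D}_\lambda)^2$, the $L$-Lipschitz gradient from $|\ell''|\le 1$, and $\diam_{\|\cdot\|_F}(\mathcal{D}_\lambda)=\sqrt{8}\lambda$; your derivation of the diameter via $\|\mat{B}\|_F^2=4\lambda^2$ and $\innerp{\mat{B}_1,\mat{B}_2}\ge 0$ for the rank-one PSD bases, and your bookkeeping for the rank/sparsity/feature counts, supply exactly the details the paper leaves implicit. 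If anything you are more careful than the paper in flagging that the away-step variant needs the Gu\'elat--Marcotte-type argument to preserve the $O(1/k)$ rate, a point the paper's one-line proof does not address.
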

\begin{proof}
The result follows from the analysis of the general FW algorithm \citep{Jaggi2013}, the fact that $f$ has $L$-Lipschitz continuous gradient and observing that $\diam_{\|\cdot\|_F}(\mathcal{D}_\lambda)=\sqrt{8}\lambda$.
\end{proof}

Note that the optimality gap in Lemma~\ref{lem:converge} is independent from $d$. This means that Algorithm~\ref{alg:fw} is able to find a good approximate solution based on a small number of features, which is very appealing in the high-dimensional setting.

\subsubsection{Complexity Analysis}

We now analyze the time and memory complexity of Algorithm~\ref{alg:fw}. Observe that the gradient has the form:
\begin{equation}
\label{eq:gradient}
\nabla f(\mat{M}) = \frac{1}{T}\sum_{t=1}^T \mat{G}^t,\quad\quad\text{where }\mat{G}^t = \left\{ \begin{array}{ll} \mat{0} & \text{if } \innerp{\mat{A}^t,\mat{M}} \geq 1\\-\mat{A}^t&\text{if }\innerp{\mat{A}^t,\mat{M}} \leq 0\\\left(\innerp{\mat{A}^t,\mat{M}} - 1\right)\mat{A}^t& \text{otherwise}\end{array}\right..
\end{equation}
The structure of the algorithm's updates is crucial to its efficiency: since $\mat{M}^{(k+1)}$ is a convex combination of $\mat{M}^{(k)}$ and a 4-sparse matrix $\mat{B}^{(k)}$, we can efficiently compute most of the quantities of interest through careful book-keeping.

In particular, storing $\mat{M}^{(k)}$ at iteration $k$ requires $O(k)$ memory. We can also recursively compute $\innerp{\mat{A}^t,\mat{M}^{(k+1)}}$ for all constraints in only $O(T)$ time and $O(T)$ memory based on $\innerp{\mat{A}^t,\mat{M}^{(k)}}$ and $\innerp{\mat{A}^t,\mat{B}^{(k)}}$. This allows us, for instance, to efficiently compute the objective value as well identify the set of satisfied constraints (those with $\innerp{\mat{A}^t,\mat{M}^{(k)}} \geq 1$) and ignore them when computing the gradient.
Finding the away direction at iteration $k$ can be done in $O(Tk)$ time.
For the line search, we use a bisection algorithm to find a root of the gradient of the 1-dimensional function of $\gamma$, which only depends on $\innerp{\mat{A}^t,\mat{M}^{(k)}}$ and $\innerp{\mat{A}^t,\mat{B}^{(k)}}$, both of which are readily available. Its time complexity is $O(T\log \frac{1}{\epsilon})$ where $\epsilon$ is the precision of the line-search, and requires constant memory.

The bottleneck is to find the forward direction. 
Indeed, sequentially considering each basis element is intractable as it takes $O(Td^2)$ time. A more efficient strategy is to sequentially consider each constraint, which requires $O(TD^2)$ time and $O(TD^2)$ memory. The overall iteration complexity of Algorithm~\ref{alg:fw} is given in Table~\ref{tab:complexity}.

\subsubsection{Approximate Forward Step}
\label{sec:approach}

\begin{table}[t]
\centering
\begin{tabular}{|c||c|c|}
\hline Variant & Time complexity & Memory complexity \\
\hline\hline Exact (Algorithm~\ref{alg:fw}) & $\tilde{O}(TD^2+Tk)$ & $\tilde{O}(TD^2+k)$\\
\hline Mini-batch & $\tilde{O}(MD^2+Tk)$ & $\tilde{O}(T+MD^2 + k)$\\
\hline Mini-batch + heuristic & $\tilde{O}(MD+Tk)$ & $\tilde{O}(T+MD + k)$\\
\hline
\end{tabular}
\caption{Complexity of iteration $k$ (ignoring logarithmic factors) for different variants of the algorithm.}
\label{tab:complexity}
\end{table}

Finding the forward direction can be expensive when $T$ and $D$ are both large. We propose two strategies to alleviate this cost by finding an approximately optimal basis (see Table~\ref{tab:complexity} for iteration complexity).

\paragraph{Mini-Batch Approximation} Instead of finding the forward and away directions based on the full gradient at each iteration, we estimate it on a mini-batch of $M\ll T$ constraints drawn uniformly at random (without replacement). The complexity of finding the forward direction is thus reduced to $O(MD^2)$ time and $O(MD^2)$ memory. Under mild assumptions, concentration bounds such as Hoeffding's inequality without replacement \citep{Serfling1974} can be used to show that with high probability, the deviation between the ``utility'' of any basis element $\mat{B}$ on the full set of constraints and its estimation on the mini-batch, namely:
$$\left|\frac{1}{M}\sum_{t=1}^M\innerp{\mat{B},\mat{G}^t}-\frac{1}{T}\sum_{t=1}^T\innerp{\mat{B},\mat{G}^t}\right|,$$
decreases as $O(1/\sqrt{M})$. In other words, the mini-batch variant of Algorithm~\ref{alg:fw} finds a forward direction which is approximately optimal. The FW algorithm is known to be robust to this setting, and convergence guarantees similar to Lemma~\ref{lem:converge} can be obtained following \citep{Jaggi2013,Freund2013}. 

\paragraph{Fast Heuristic} To avoid the quadratic dependence on $D$, we propose to use the following heuristic to find a good forward basis. We first pick a feature $i\in\{1,\dots,d\}$ uniformly at random, and solve the linear problem over the restricted set $\bigcup_{j} \{\mat{P}_\lambda^{(ij)}, \mat{N}_\lambda^{(ij)}\}$. We then fix $j$ and solve the problem again over the set $\bigcup_{k} \{\mat{P}_\lambda^{(kj)}, \mat{N}_\lambda^{(kj)}\}$ and use the resulting basis for the forward direction. This can be done in only $O(MD)$ time and $O(MD)$ memory and gives good performance in practice, as we shall see in the next section.


%
%
%


\section{Experiments}
\label{sExp}

In this section, we present experiments to study the performance of HDSL in
classification, dimensionality reduction and data exploration against
competing approaches. Our Matlab code is publicly available on GitHub under
GNU/GPL 3 license.\footnote{\url{https://github.com/bellet/HDSL}} 

\subsection{Experimental Setup}

\paragraph{Datasets} We report experimental results on several real-world classification datasets with up to 100,000 features. Dorothea and dexter come from the NIPS 2003 feature selection challenge \citep{Guyon2004} and are respectively pharmaceutical and text data with predefined splitting into training, validation and test sets. They both contain a large proportion of noisy/irrelevant features. Reuters CV1 is a popular text classification dataset with bag-of-words representation. We use the binary version from the LIBSVM dataset collection\footnote{\url{http://www.csie.ntu.edu.tw/~cjlin/libsvmtools/datasets/}} (with 60\%/20\%/20\% random splits) and the 4-classes version (with 40\%/30\%/30\% random splits) introduced in \citep{Cai2012}. Detailed information on the datasets and splits is given in Table \ref{tDatasets}. All datasets are normalized such that each feature takes values in $[0,1]$.

\begin{table}[!t]
\centering
\begin{tabular}{|c||c|c|c|c|c|}
\hline
Datasets     & Dimension & Sparsity & Training size & Validation size & Test size \\ \hline\hline
dexter       & 20,000       & 0.48\%   & 300           & 300        & 2,000      \\ \hline
dorothea     & 100,000      & 0.91\%   & 800           & 350        & 800       \\ \hline
rcv1\_2 & 47,236    & 0.16\%   & 12,145         & 4,048       & 4,049      \\ \hline
rcv1\_4      & 29,992    & 0.26\%   & 3,850          & 2,888       & 2,887      \\ \hline
\end{tabular}
\caption{Datasets used in the experiments}
\label{tDatasets}
\end{table}


\paragraph{Competing Methods} We compare the proposed approach (\textsc{hdsl}) to several methods:
\begin{itemize}
\item \ident: The standard dot product as a baseline, which corresponds to using $\mat{M} = \mat{I}$. 
\item \mldiag: Diagonal similarity learning (i.e., a weighting of the features), as done in \cite{Gao2014}. We obtain it by minimizing the same loss as in \hdsl with $\ell_2$ and $\ell_1$ regularization, i.e.,
\begin{equation*}
\min_{\vct{w}\in\mathbb{R}^d} \quad f(\vct{w}) = \frac{1}{T}\sum_{t=1}^T \ell\left(\innerp{\mat{A}^t,\diag(\vct{w})}\right)+\lambda\Omega(\vct{w}),
\end{equation*}
where $\Omega(\vct{w}) \in \{\|\vct{w}\|_2^2,\|\vct{w}\|_1\}$ and $\lambda$ is the regularization parameter. Optimization is done using (proximal) gradient descent.
\item \rpml: Similarity learning in random projected space. Given $r\ll d$, let $\mat{R}\in\mathbb{R}^{d\times r}$ be a matrix where each entry $r_{ij}$ is randomly drawn from $\mathcal{N}(0,1)$. For each data instance $\vct{x}\in\mathbb{R}^d$, we generate $\tilde{\vct{x}} =\frac{1}{\sqrt{r}}\mat{R}\T\vct{x} \in \mathbb{R}^r$ and use this reduced data in OASIS \citep{Chechik2009}, a fast online method to learn a bilinear similarity from triplet constraints.
\item \pcaml: Similarity learning in PCA space. Same as \rpml, except that PCA is used instead of random projections to project the data into $\mathbb{R}^r$.
\item \svm: Support Vector Machines. We use linear SVM, which is known to perform well for sparse high-dimensional data \citep{Caruana2008}, with $\ell_2$ and $\ell_1$ regularization. We also use nonlinear SVM with the polynomial kernel (2nd and 3rd degree) popular in text classification \citep{Chang2010}. The SVM models are trained using liblinear \citep{Fan2008} and libsvm \citep{Chang2011} with 1vs1 paradigm for multiclass.
\end{itemize}

\paragraph{Training Procedure} For all similarity learning algorithms, we generate 15 training constraints for each instance by identifying its 3 target neighbors (nearest neighbors with same label) and 5 impostors (nearest neighbors with different label), following \cite{Weinberger2009}. Due to the very small number of training instances in dexter, we found that better performance is achieved using 20 constraints per instance $\vct{x}$, each of them constructed by randomly drawing a  point from the class of $\vct{x}$ and a point from a different class.
All parameters are tuned using the accuracy on the validation set. For \hdsl, we use the fast heuristic described in Section~\ref{sec:approach} and tune the scale parameter $\lambda \in \{1,10,\dots,10^9\}$. The regularization parameters of \mldiag and \svm are tuned in $\{10^{-9},\dots,10^8\}$ and the ``aggressiveness'' parameter of OASIS is tuned in $\{10^{-9},\dots,10^2\}$.

\subsection{Results} 

\begin{table}[t]
\centering
\begin{tabular}{|c||c|c|c|c|c|c|}
\hline
Datasets     & \ident  & \rpml & \pcaml &\mldiag-$\ell_2$ & \mldiag-$\ell_1$ & \hdsl \\ \hline\hline
dexter       & 20.1 & 24.0  {[}1000{]} & 9.3 [50] &8.4       & 8.4  {[}773{]}              & \textbf{6.5}  {[}183{]}         \\ \hline
dorothea     & 9.3    & 11.4 {[}150{]} & 9.9 [800]  & 6.8       & 6.6  {[}860{]}              & \textbf{6.5} {[}731{]}          \\ \hline
rcv1\_2 & 6.9  & 7.0  {[}2000{]}  & 4.5 [1500] &3.5       & 3.7  {[}5289{]}             & \textbf{3.4}  {[}2126{]}        \\ \hline
rcv1\_4      & 11.2 & 10.6  {[}1000{]} & 6.1 	[800] &6.2       & 7.2  {[}3878{]}             & \textbf{5.7}  {[}1888{]}        \\ \hline
\end{tabular}
\caption{$k$-NN test error (\%) of the similarities learned with each method. The number of features used by each similarity (when smaller than $d$) is given in brackets. Best accuracy on each dataset is shown in bold.}
\label{tKnnErr}
\end{table}

\begin{table}[t]
\centering
\begin{tabular}{|c||c|c|c|c|c|}
\hline
Datasets     & \svm-poly-2 & \svm-poly-3 & \svm-linear-$\ell_2$ & \svm-linear-$\ell_1$ & \hdsl \\ \hline\hline
dexter       & 9.4                   & 9.2                   & 8.9        & 8.9  {[}281{]}                     & \textbf{6.5}  {[}183{]}         \\ \hline
dorothea     & 7                     & 6.6                   & 8.1        & 6.6  {[}366{]}                    & \textbf{6.5} {[}731{]}          \\ \hline
	rcv1\_2 & 3.4                   & \textbf{3.3}                   & 3.5        & 4.0  {[}1915{]}                    & 3.4  {[}2126{]}        \\ \hline
rcv1\_4      & 5.7                   & 5.7                   & \textbf{5.1}        & 5.7  {[}2770{]}                    & 5.7 {[}1888{]}        \\ \hline
\end{tabular}
\caption{Test error (\%) of several SVM variants compared to \hdsl. As in Table~\ref{tKnnErr}, the number of features is given in brackets and best accuracies are shown in bold.}
\label{tCompareSVMs}
\end{table}

\paragraph{Classification Performance} We first investigate the performance of each similarity learning approach in $k$-NN classification ($k$ was set to 3 for all experiments). For \rpml and \pcaml, we choose the dimension $r$ of the reduced space based on the accuracy of the learned similarity on the validation set, limiting our search to $r\leq 2000$ because OASIS is extremely slow beyond this point.\footnote{Note that the number of PCA dimensions is at most the number of training examples. Therefore, for dexter and dorothea, $r$ is at most 300 and 800 respectively.} Similarly, we use the performance on validation data to do early stopping in \hdsl, which also has the effect of restricting the number of features used by the learned similarity.

Table~\ref{tKnnErr} shows the $k$-NN classification performance. First, notice that  \rpml often performs worse than \ident, which is consistent with previous observations that a large number of random projections may be needed to obtain good performance \citep{Fradkin2003}. \pcaml gives much better results, but is generally outperformed by a simple diagonal similarity learned directly in the original high-dimensional space. \hdsl, however, outperforms all other algorithms on these datasets, including \mldiag. This shows the good generalization performance of the proposed approach, even though the number of training samples is sometimes very small compared to the number of features, as in dexter and dorothea. It also shows the relevance of encoding ``second order'' information (pairwise interactions between the original features) in the similarity instead of simply considering a feature weighting as in \mldiag.


Table~\ref{tCompareSVMs} shows the comparison with SVMs. Interestingly, \hdsl with $k$-NN outperforms all SVM variants on dexter and dorothea, both of which have a large proportion of irrelevant features. This shows that its greedy strategy and early stopping mechanism achieves better feature selection and generalization than the $\ell_1$ version of linear SVM. On the other two datasets, \hdsl is competitive with SVM, although it is outperformed slightly by one variant (\svm-poly-3 on rcv1\_2 and \svm-linear-$\ell_2$ on rcv1\_4), both of which rely on all features.

\paragraph{Feature Selection and Sparsity} 

\begin{figure}[t]
\centering
\subfigure[dexter dataset]{\includegraphics[width=0.35\textwidth]{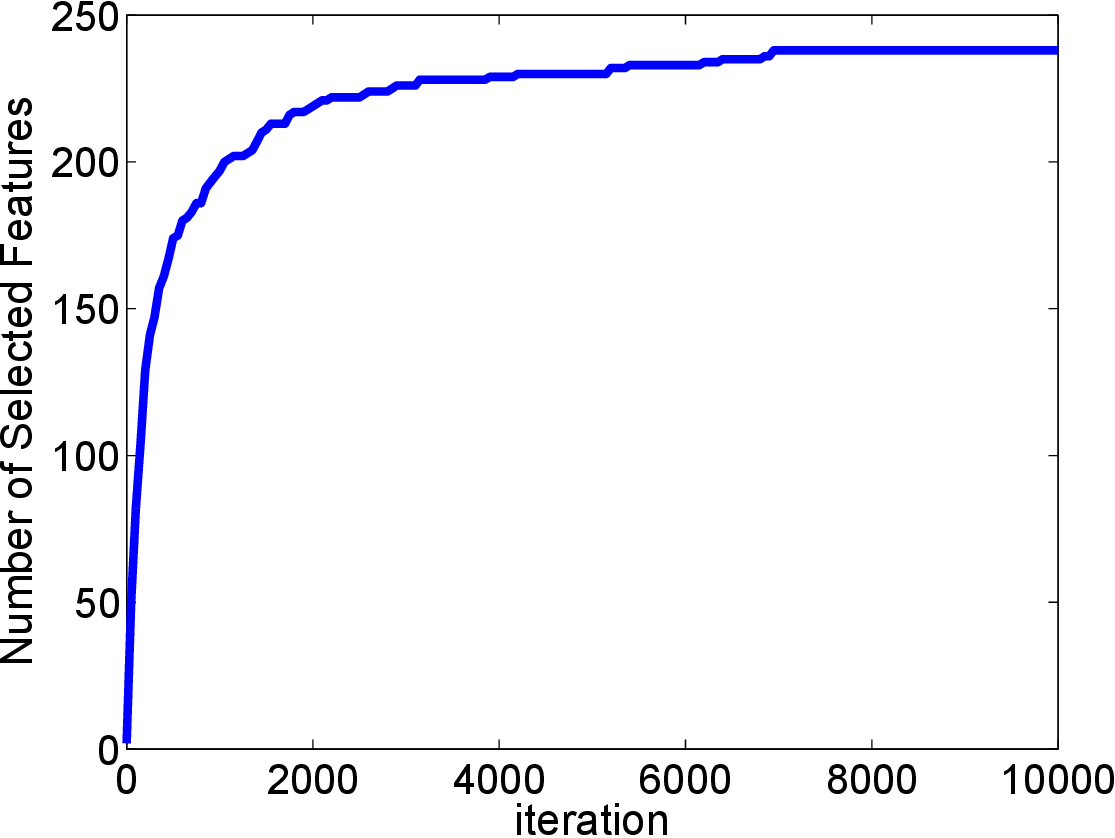}
\label{fNumFea1}}
\subfigure[rcv1\_4 dataset]{\includegraphics[width=0.35\textwidth]{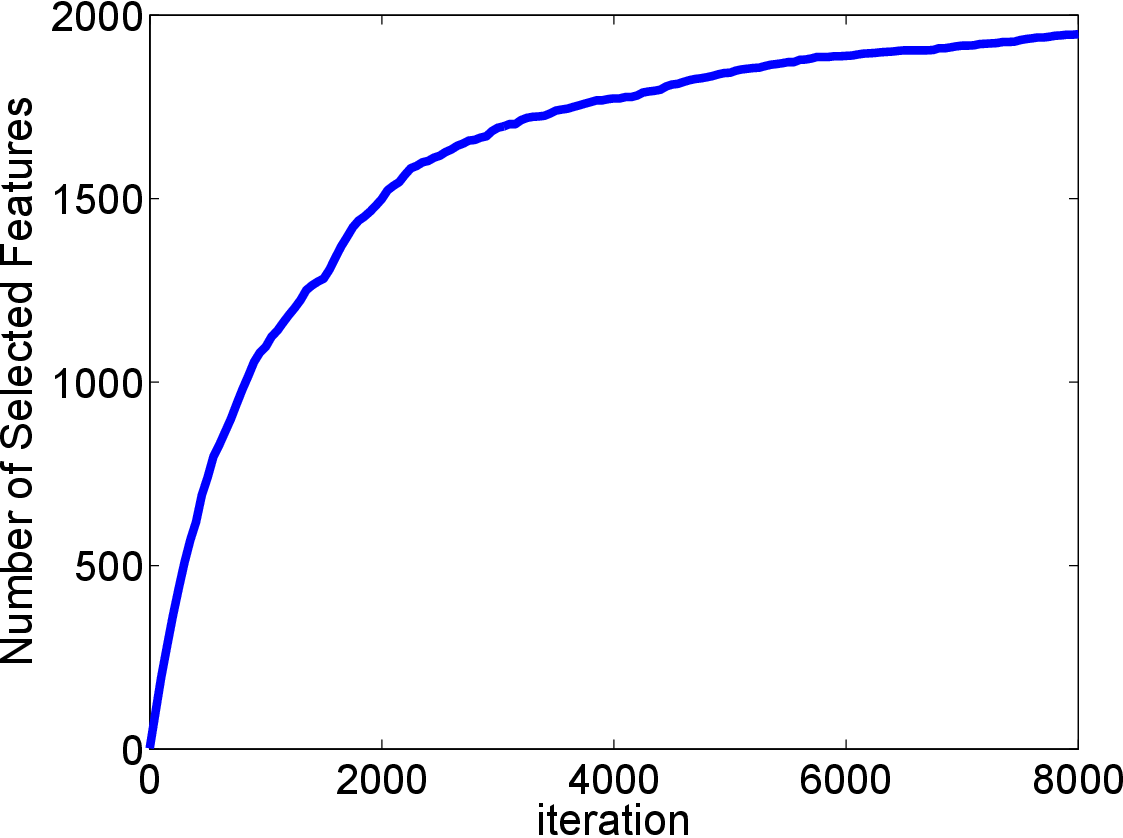} 
\label{fNumFea2}}
 \caption{Number of active features learned by \hdsl as a function of the iteration number.}\label{fSparseFeature}
 \label{fFeaNum}
\end{figure}

\begin{figure}[t]
\centering
\subfigure[dexter ($20,000\times20,000$ matrix, 712 nonzeros)]{\includegraphics[width=0.4\textwidth]{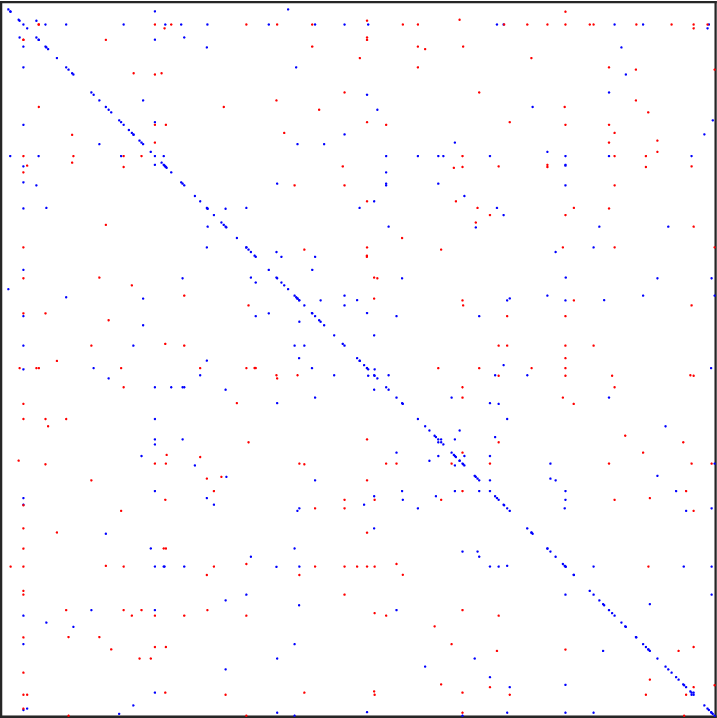}
\label{dexter_mat}}
\hspace*{1cm}\subfigure[rcv1\_4 ($29,992\times29,992$ matrix, 5263 nonzeros)]{\includegraphics[width=0.4\textwidth]{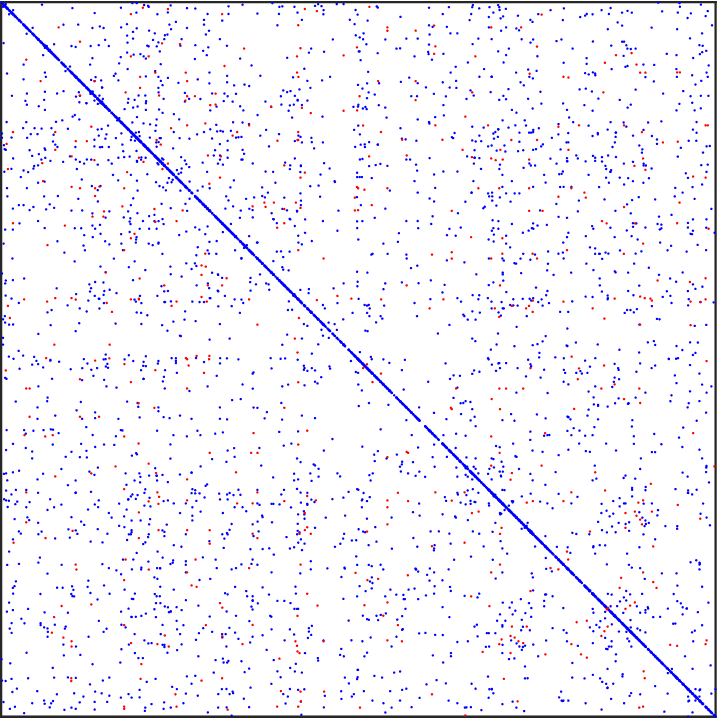} 
\label{rcv14_mat}}
 \caption{Sparsity structure of the matrix $\mat{M}$ learned by \hdsl. Positive and negative entries are shown in blue and red, respectively (best seen in color).}\label{}
 \label{hdsl_mat}
\end{figure}

We now focus on the ability of \hdsl to perform feature selection and more generally to learn sparse similarity functions. To better understand the behavior of \hdsl, we show in Figure~\ref{fFeaNum} the number of selected features as a function of the iteration number for two of the datasets. Remember that at most two new features can be added at each iteration. Figure~\ref{fFeaNum} shows that \hdsl incorporates many features early on but tends to eventually converge to a modest fraction of features (the same observation holds for the other two datasets). This may explain why \hdsl does not suffer much from overfitting even when training data is scarce as in dexter.

Another attractive characteristic of \hdsl is its ability to learn a matrix that is sparse not only on the diagonal but also off-diagonal (the proportion of nonzero entries is in the order of 0.0001\% for all datasets). In other words, it only relies on a few relevant pairwise interactions between features. Figure~\ref{hdsl_mat} shows two examples, where we can see that \hdsl is able to exploit the product of two features as either a positive or negative contribution to the similarity score. This opens the door to an analysis of the importance of pairs of features (for instance, word co-occurrence) for the application at hand. Finally, the extreme sparsity of the matrices allows very fast similarity computation. 

Finally, it is also worth noticing that \hdsl uses significantly less features than \mldiag-$\ell_1$ (see numbers in brackets in Table~\ref{tKnnErr}). We attribute this to the extra modeling capability brought by the non-diagonal similarity observed in Figure~\ref{hdsl_mat}.\footnote{Note that \hdsl uses roughly the same number of features as \svm-linear-$\ell_1$ (Table~\ref{tCompareSVMs}), but drawing any conclusion is harder because the objective and training data for each method are different. Moreover, 1-vs-1 SVM combines several binary models to deal with the multiclass setting.}





\paragraph{Dimension Reduction}

\begin{figure}[t]
\centering
\subfigure[dexter dataset]{
\includegraphics[width=0.41\textwidth]{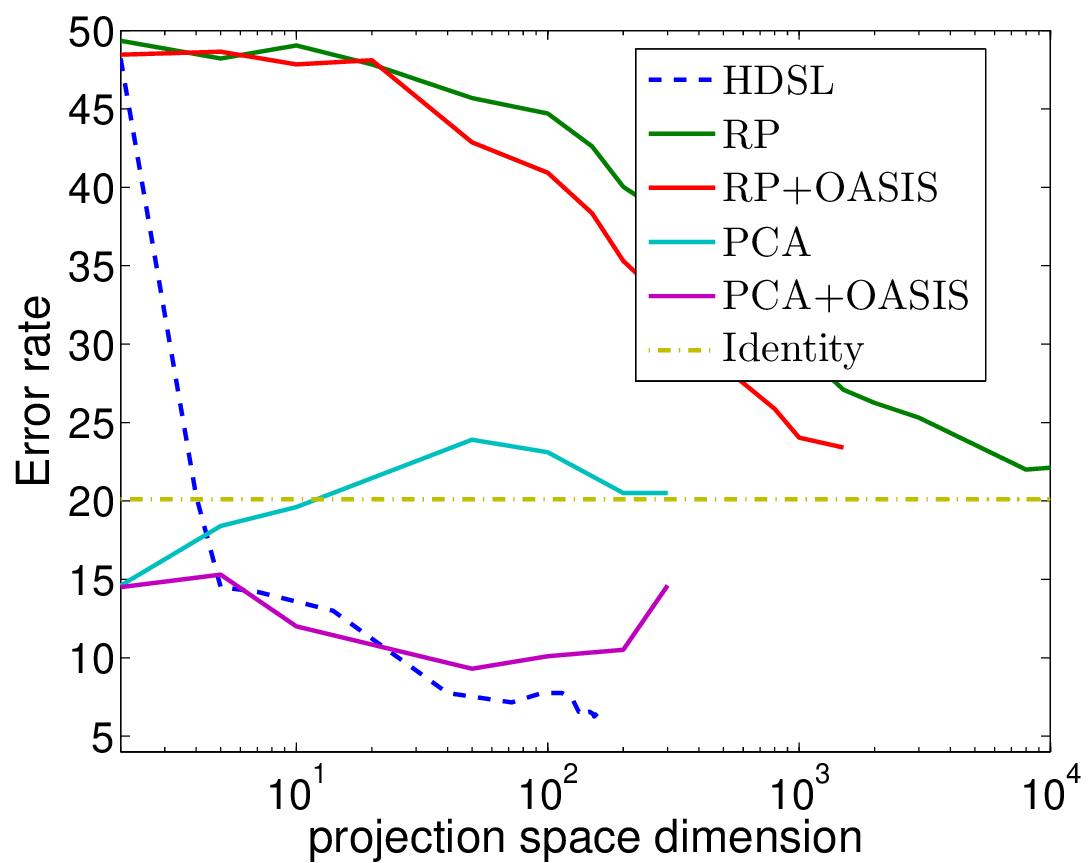}
\label{fErr_dim1}
}
\subfigure[dorothea dataset]{\includegraphics[width=0.41\textwidth]{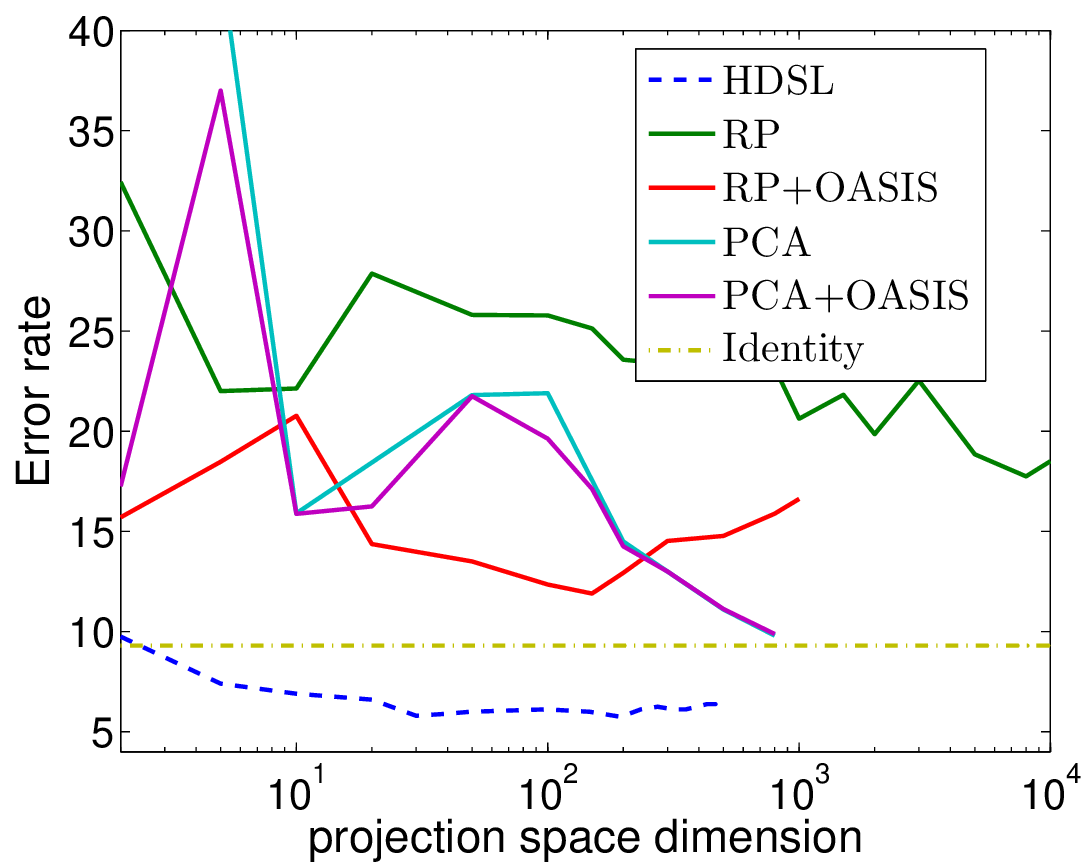} 
\label{fErr_dim4}}
\subfigure[rcv1\_2 dataset]{\includegraphics[width=0.41\textwidth]{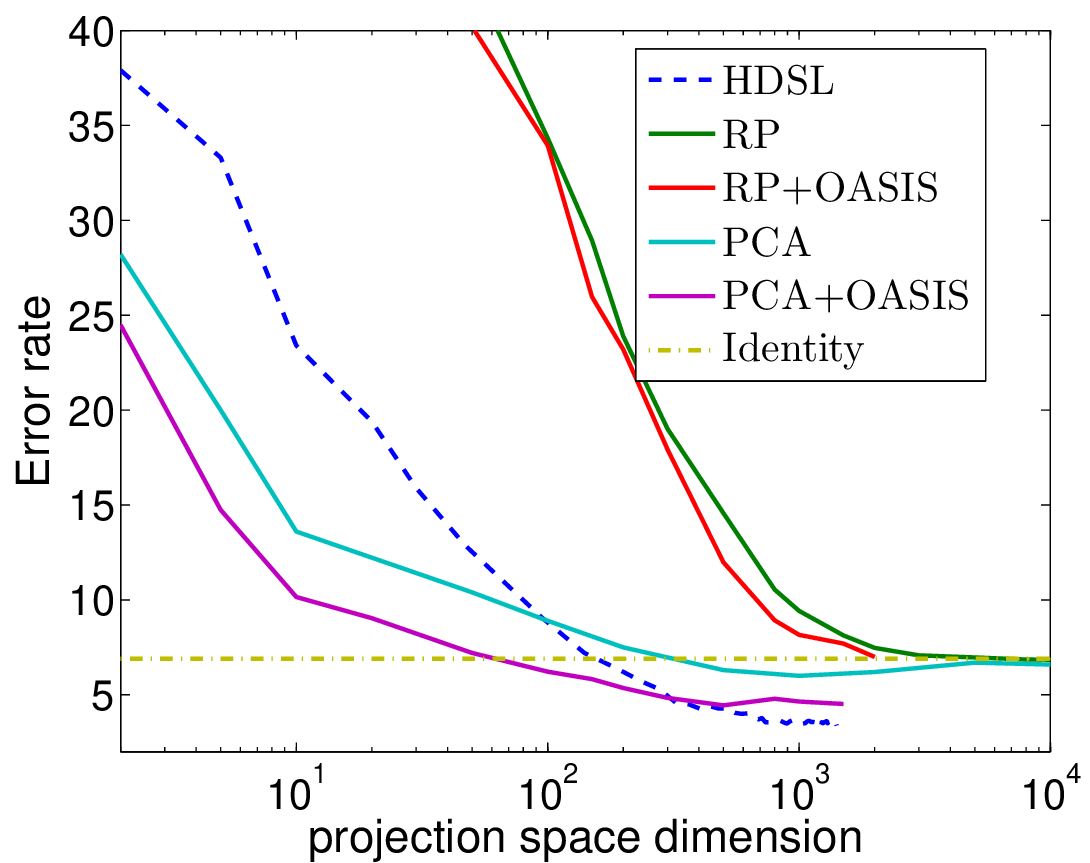} 
\label{fErr_dim2}}
\subfigure[rcv1\_4 dataset]{
\includegraphics[width=0.41\textwidth]{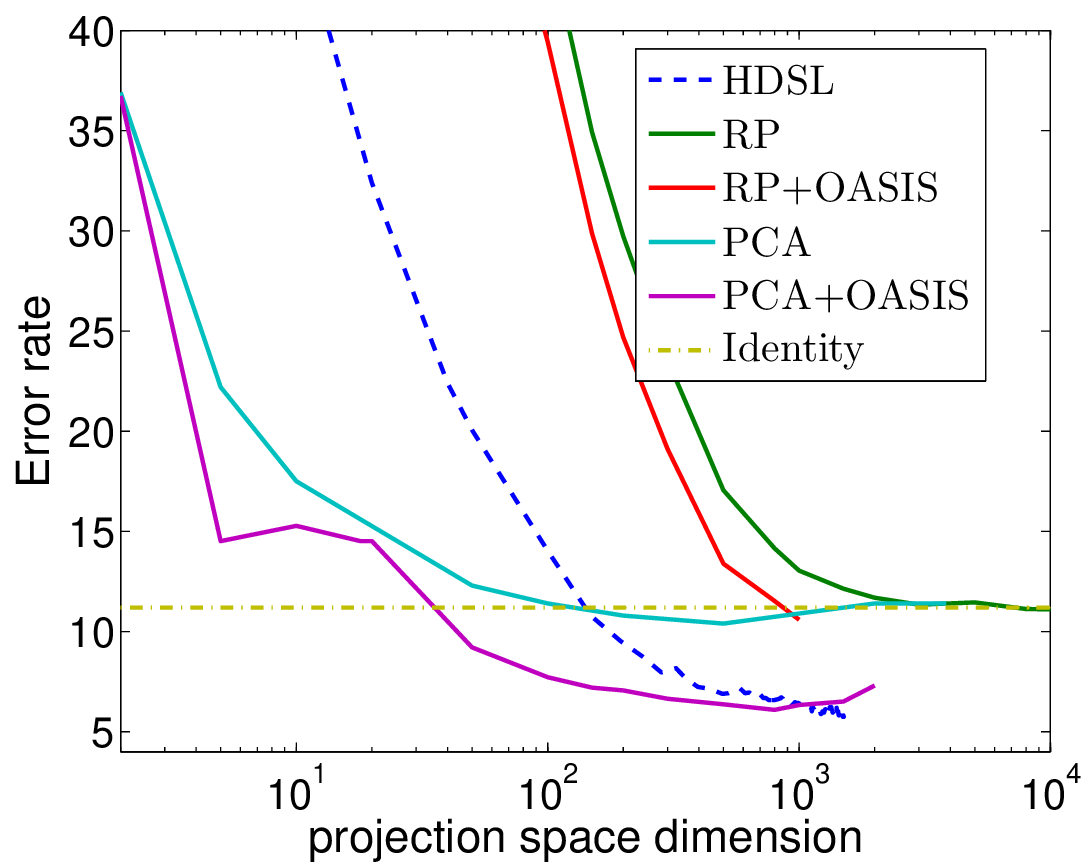}
\label{fErr_dim3}
}
 \caption{$k$-NN test error as a function of the dimensionality of the space (in log scale). Best seen in color.}\label{fErr_dim}
\end{figure}

We now investigate the potential of \hdsl for dimensionality reduction. Recall that \hdsl learns a sequence of PSD matrices $\mat{M}^{(k)}$. We can use the square root of $\mat{M}^{(k)}$ to project the data into a new space where the dot product is equivalent to $S_{\mat{M}^{(k)}}$ in the original space. The dimension of the projection space is equal to the rank of $\mat{M}^{(k)}$, which is upper bounded by $k+1$. A single run of \hdsl can thus be seen as incrementally building projection spaces of increasing dimensionality.

To assess the dimensionality reduction quality of \hdsl (measured by $k$-NN classification error on the test set), we plot its performance at various iterations during the runs that gave the results in Table~\ref{tKnnErr}. We compare it to two standard dimensionality reduction techniques: random projection and PCA. We also evaluate \rpml and \pcaml, i.e., learn a similarity with OASIS on top of the RP and PCA features.\footnote{Again, we were not able to run OASIS beyond a certain dimension due to computational complexity.} Note that OASIS was tuned separately for each projection size, making the comparison a bit unfair to \hdsl. The results are shown in Figure~\ref{fErr_dim}.
As observed earlier, random projection-based approaches achieve poor performance. When the features are not too noisy (as in rcv1\_2 and rcv1\_4), PCA-based methods are better than \hdsl at compressing the space into very few dimensions, but \hdsl eventually catches up. On the other hand, PCA suffers heavily from the presence of noise (dexter and dorothea), while \hdsl is able to quickly improve upon the standard similarity in the original space. Finally, on all datasets, we observe that \hdsl converges to a stationary dimension without overfitting, unlike \pcaml which exhibits signs of overfitting on dexter and rcv1\_4 especially.

\section{Conclusion}
\label{sConclude}

In this work, we proposed an efficient approach to learn similarity functions from high-dimensional sparse data. This is achieved by forming the similarity as a combination of simple sparse basis elements that operate on only two features and the use of an (approximate) Frank-Wolfe algorithm. Experiments on real-world datasets confirmed the robustness of the approach to noisy features and its usefulness for classification and dimensionality reduction. Together with the extreme sparsity of the learned similarity, this makes our approach potentially useful in a variety of other contexts, from data exploration to clustering and ranking, and more generally as a way to preprocess the data before applying any learning algorithm.

\paragraph{Acknowledgments} 
This work was in part supported by the Intelligence Advanced
Research Projects Activity (IARPA) via Department of Defense
U.S. Army Research Laboratory (DoD / ARL) contract
number W911NF-12-C-0012, a NSF IIS-1065243, an Alfred. P. Sloan Research Fellowship, DARPA award D11AP00278, and an ARO YIP Award (W911NF-12-1-0241).  The U.S. Government is authorized
to reproduce and distribute reprints for Governmental
purposes notwithstanding any copyright annotation thereon.
The views and conclusions contained herein are
those of the authors and should not be interpreted as necessarily
representing the official policies or endorsements, either
expressed or implied, of IARPA, DoD/ARL, or the U.S.
Government.

\bibliographystyle{plainnat}
\bibliography{main}

\end{document}